\newtheorem{theorem}{Theorem}
\newtheorem{definition}{Definition}
\newcommand{\R}{\mathbb{R}}
\DeclarePairedDelimiter{\abs}{\lvert}{\rvert}
\title{Diffusion Curvature for Estimating Local Curvature in High Dimensional Data}
\author{Dhananjay Bhaskar\textsuperscript{1\textdagger}, Kincaid MacDonald\textsuperscript{2\textdagger}, Oluwadamilola Fasina\textsuperscript{3}, Dawson Thomas\textsuperscript{2,4},\\ \textbf{Bastian Rieck}\textsuperscript{5}, \textbf{Ian Adelstein}\textsuperscript{2 *}\& \textbf{Smita Krishnaswamy}\textsuperscript{1,3,6,7 *}\vspace*{.1cm}\\
\textsuperscript{1}Department of Genetics, Yale School of Medicine, New Haven, CT, USA\\
\textsuperscript{2}Department of Mathematics, Yale University, New Haven, CT, USA\\
\textsuperscript{3}Program for Applied Mathematics, Yale University, New Haven, CT, USA\\
\textsuperscript{4}Department of Physics, Yale University, New Haven, CT, USA\\
\textsuperscript{5}Institute of AI for Health, Helmholtz Pioneer Campus, Munich, Germany\\
\textsuperscript{6}Department of Computer Science, Yale University, New Haven, CT, USA\\
\textsuperscript{7}Program for Computational Biology and Bioinformatics, Yale University, New Haven, CT, USA\vspace*{.1cm}\\
\textdagger co-first authors, *co-senior authors\\
\textit{Correspondence:} \texttt{smita.krishnaswamy@yale.edu} \\
}
\begin{document}
\maketitle

\begin{abstract}
We introduce a new intrinsic measure of local curvature on point-cloud data called {\em diffusion curvature}. Our measure uses the framework of diffusion maps, including the data diffusion operator, to structure point cloud data and define local curvature based on the laziness of a random walk starting at a point or region of the data. We show that this laziness directly relates to volume comparison results from Riemannian geometry. We then extend this scalar curvature notion to an entire quadratic form using neural network estimations based on the diffusion map of point-cloud data. We show applications of both estimations on toy data, single-cell data and on estimating local Hessian matrices of neural network loss landscapes.
\end{abstract}


\section{Introduction}
\label{sec:intro}

With the advent of high-throughput high-dimensional point-cloud data in many fields including biomedicine, social science, physics, and finance, there is an increasing need for methods to extract structure and patterns. A key framework for exploring such data is the manifold hypothesis, which states that high-dimensional data arise as samples from an underlying lower dimensional manifold. The diffusion geometry and diffusion map framework first introduced by Coifman et al.~\cite{coifman2006diffusion} has proven to be a useful framework for the shape and structure of the data. Diffusion geometry involves converting data into an affinity graph which is then Markov normalized to form a data diffusion operator. This diffusion operator allows for the understanding of geometric features of the data such as manifold-intrinsic distances, and intrinsic manifold eigendimensions~(also known as diffusion maps). However, curvature, a salient feature of Riemannian geometry, has received less attention. Being an inherently smooth quantity, its extension to the discrete case is not straightforward. The most authoritative definitions of curvature in the discrete setting have been that of Ollivier--Ricci~\cite{sia_ollivier-ricci_2019} and of Forman~\cite{Forman03}, which compute curvature on edges of a graph. By contrast, here we propose a method of estimating curvature at {\em points or regions} in point-cloud data, where there are no inherent edges to consider---one of the first such formulations to our knowledge. 

To achieve this, we utilize the connection between data diffusion and volume to define a new formulation of curvature known as {\em diffusion curvature} based on the volume growth of graph diffusion starting from a vertex. Starting from a unit mass placed at one or several points, we show that in cases of positive curvature the volume growth tends to be slower with more of the mass remaining in the original point, whereas in cases of negative curvature the mass tends to disperse faster to other points. Therefore the ratio of mass in the original point, compared to the remainder of the cloud gives a scalar value of curvature. We note that this method avoids the complexity of having to consider all (potentially exponentially many) edges that would arise by simply attempting to directly translate notions of edge curvature to the data affinity graph.  In addition to the scalar notion of curvature, we also show that we are able to compute a quadratic form of arbitrarily high dimensions using neural network training based on diffusion probabilities as the input. 

We validate our notion of curvature on toy datasets as well as real single-cell point cloud data of very high dimensionality. Then, we study both the scalar diffusion as well as the diffusion based quadratic form in studying the area surrounding found minima in neural network loss landscapes. The Hessian of the parameter space is generally difficult to compute as it contains $\mathcal{O}(n^2)$ entries for a neural network with $n$ parameters. However, restricting ourselves to the area around the minima yields a lower dimensional space, where only a subset of parameter combinations change in the vicinity. We sample in this lower dimensional space and compute a diffusion operator which allows us to assess the curvature of the minima. First, we can assess if the point of convergence is indeed a minimum or potentially a saddle. Further, studies have shown that flat minima, i.e., minima situated in a neighborhood of the loss landscape with roughly similar error, tend to generalize better than sharp minima. These qualities can be assessed from curvature estimations of minima regions. 

Our {\em key contributions} include:

\begin{compactitem}
    \item The formulation of diffusion curvature, an intrinsic scalar valued curvature for regions of high dimensional data sampled from an underlying manifold, computed from data diffusion.
    \item Proof that our notion of curvature will result in higher values in positive curvature based on volume comparison results from Riemannian geometry.
    \item A neural network version that computes a quadratic form starting from a diffusion map.
    \item Validation of our framework on toy and single cell datasets.
    \item Application to Hessian estimation in neural networks.
\end{compactitem}


\section{Preliminaries and Background}
\label{sec:bg}

\subsection{High Dimensional Point Clouds and the Manifold Assumption}
\label{sec:bg-manifold-assumption}

A useful assumption in representation learning is that high-dimensional data is sampled from an intrinsic low-dimensional manifold that is mapped via nonlinear functions to observable high-dimensional measurements; this is commonly referred to as the manifold assumption. Formally, let $\mathcal{M}^d$ be a hidden $d$ dimensional manifold that is only observable via a collection of $n \gg d$ nonlinear functions $f_1,\ldots,f_n\colon \mathcal{M}^d \to \mathbbm{R}$ that enable its immersion in a high-dimensional ambient space as $F(\mathcal{M}^d) = \{\mathbf{f}(z) = (f_1(z),\ldots,f_n(z))^T : z \in \mathcal{M}^d \} \subset \mathbbm{R}^n$ from which data is collected. Conversely, given a dataset $X = \{x_1, \ldots, x_N\} \subset \mathbbm{R}^n$ of high-dimensional observations, manifold learning methods assume data points originate from a sampling $Z = \{z_i\}_{i=1}^N \subset \mathcal{M}^d$ of the underlying manifold via $x_i = \mathbf{f}(z_i)$, and aim to learn a low dimensional intrinsic representation that approximates the manifold geometry of~$\mathcal{M}^d$.

\subsubsection{Diffusion Maps}
\label{sec:bg-diff-map}

To learn a manifold geometry from collected point cloud data, we use the popular diffusion maps construction~\cite{coifman2006diffusion}. This construction starts by considering local similarities defined via a kernel $\mathcal{K}(x,y)$, with $x,y \in F (\mathcal{M}^d),$ that captures local neighborhoods in the data. While the Gaussian kernel is a popular choice for $\mathcal{K}$, it encodes sampling density information in its computation.
Hence, to construct a diffusion geometry that is robust to sampling density variations, we use an anisotropic kernel
$$\mathcal{K}(x,y) = \frac{\mathcal{G}(x,y)}{\|\mathcal{G}(x,\cdot)\|_1^{\alpha} \|\mathcal{G}(y,\cdot)\|_1^{\alpha}}$$
with $ \mathcal{G}(x,y) = e^{-\nicefrac{\|x-y\|^2}{\sigma}}$ as proposed in \cite{coifman2006diffusion}, where $0 \leq \alpha \leq 1$ controls the separation of geometry from density, with $\alpha = 0$ yielding the classic Gaussian kernel, and $\alpha = 1$ completely removing density and providing a geometric equivalent to uniform sampling of the underlying manifold. In \cite{coifman2006diffusion}, it is shown that in the limit of infinitely many points this becomes equivalent to a Laplace--Beltrami operator, which encodes geometric properties of the underlying manifold. 

Next, the similarities encoded by $\mathcal{K}$ are normalized to define transition probabilities $p(x,y) = \nicefrac{\mathcal{K}(x,y)}{\|\mathcal{K}(x,\cdot)\|_1}$ that are organized in an $N \times N$ row stochastic matrix $\mathbf{P}_{ij} = p(x_i, x_j)$ that describes a Markovian diffusion process over the intrinsic geometry of the data.

Finally, a diffusion map is defined by taking the eigenvalues $1 = \lambda_1 \geq \lambda_2 \geq \cdots \geq \lambda_N$ and (corresponding) eigenvectors $\{\phi_j\}_{j=1}^N$ of $\mathbf{P}$, and mapping each data point $x_i \in {X}$ to an $N$ dimensional vector:
\begin{equation}\label{equation:diffusionmap}
\Phi_t(x_i) = [\lambda_1^t \phi_1(x_i),\ldots,\lambda_N^t \phi_N(x_i)]^T
\end{equation}
Here $t$ represents a diffusion-time. In general, as $t$ increases, most of the eigenvalues become negligible;  truncated diffusion map coordinates can thus be used for dimensionality reduction, and Euclidean distances in this space are a manifold-intrinsic distance~\cite{coifman2006diffusion, moon_visualizing_2019}.

The eigenvalues of the Laplace-Beltrami operator, as well as the eigenvalues of the diffusion map, encode geometry information about a compact Riemannian manifold~\citep{coifman2006diffusion}, as can be seen via the asymptotic expansion of the trace of the heat kernel~\citep{gordon}. Dimension, volume, and total scalar curvature are spectrally determined properties of the manifold. Given the relationship between diffusion coordinates and the eigenvectors of the Laplace-Beltrami operator, one can surmise that the diffusion operator also encodes geometric information. These relations motivate our use of diffusion to measure curvature on a data manifold. Supplementary materials contain additional details in the Riemannian setting. 

\subsection{Discrete Curvature in Riemannian Geometry}
\label{sec:bg-riemann-geometry}

There is a long history in Riemannian geometry of using curvature to study geometric and topological properties of a Riemannian manifold. In particular, lower bounds on Ricci curvature have been related to diameter (Bonnet-Myers \cite{myers}), volume (Bishop-Gromov \cite{bishop}), the Laplacian (Cheng-Yau \cite{Yau75}), the isoperimetric constant (Buser \cite{Buser1982}), and topological properties of the manifold (Hamilton's Ricci flow \cite{hamilton}). Although somewhat paradoxical, there has recently been work to extend these smooth Riemannian ideas to the discrete setting, in particular to graphs and Markov chains. Especially notable in this regard is the definition of discrete Ricci curvature due to Ollivier which makes use of the transport distance between probability distributions. 

Ollivier's Ricci curvature \cite{sia_ollivier-ricci_2019} starts with a metric space $X$ equipped with a random walk (a probability measure $m_x(\cdot)$ for each $x \in X$) and assigns the edge-wise scalar curvature
$k(x,y)=1-\nicefrac{W_1(m_x,m_y)}{d(x,y)}$ where $W_1(\cdot,\cdot)$ is the $L^1$ transportation distance~(also known as earth mover's distance or Wasserstein distance). In the Riemannian setting, if one defines the random walk to be $dm^r_x(y)=\nicefrac{dvol(y)}{vol B(x,r)}$ then Ollivier demonstrates that $k(x,y)=\nicefrac{r^2Ric(v,v)}{2d+2}+O(r^3+d(x,y)r^2)$ where $v$ is a unit tangent vector, and $y$ is a point on the geodesic issuing from $v$ with $d(x,y)$ small enough. Ollivier (and others \cite{seong}, \cite{Yau2011}) have used lower bounds on this notion of curvature to study global properties of the space (i.e.~diameter, volume growth, spectral gap). 

The main downsides of Ollivier's Ricci curvature are that it is an edge-wise notion and that it requires knowledge of the transport distance. A priori point cloud data does not come with a graph structure. We therefore propose a diffusion based method for prescribing a point-wise scalar curvature to point cloud data. Like Ollivier's notion, this \emph{diffusion curvature} is based on the idea that the spread of geodesics is influenced by Riemannian curvature. We prove theoretical bounds on the diffusion curvature by appealing to the Bishop-Gromov volume comparison theorem.

\subsection{The Loss Landscape and Hessian of Neural Networks}
\label{sec:bg-hessian}

Neural networks are trained such that their output function $f(X, \theta)$ matches a given function $y(X)$ on training data $X$ by performing, e.g., stochastic gradient descent on a loss function comparing $y$ and $f$ with respect to the parameters $\theta$. A typical example of a loss function is mean squared error $\mathcal{L}(X,\theta)=\sum_{x \in X}||f(x)-y(x)||_2$. 

The loss function defines a loss landscape for fixed training data $X$. Previous studies have sought to directly examine the loss landscape of a neural network. Li et al.~\cite{li_visualizing_2018} used random samples around found minima to visualize and quantify sharpness of minima, Horoi et al.~\cite{horoi_exploring_2022} utilized jump-and-retrain sampling to visualize and classify ``good'' and ``bad'' minima. Here, we use data diffusion to directly estimate the Hessian of the loss landscape, with respect to the parameters of a neural network,  around found minima, providing a quantitative measure of the region in terms of its spectrum and condition number.

In general, the Hessian of a scalar-valued function, $\mathcal{L}(\mathbf{\theta})$ is a symmetric matrix of partial derivatives with the quadratic form:

$$H\mathcal{L}(\mathbf{v}) = 
    \begin{pmatrix} v_1 & \dots & v_n \end{pmatrix}
    \begin{pmatrix} 
    \frac{\partial^2 \mathcal{L}}{\partial \theta_1 \partial \theta_1} & \dots & \frac{\partial^2 \mathcal{L}}{\partial \theta_1 \partial \theta_n} \\
    \vdots & \ddots & \vdots\\
    \frac{\partial^2 \mathcal{L}}{\partial \theta_n \partial \theta_1} & \dots & \frac{\partial^2 \mathcal{L}}{\partial \theta_n \partial \theta_n} 
    \end{pmatrix} 
    \begin{pmatrix}v_1 \\ \vdots \\ v_n \end{pmatrix}
$$

Since the gradient is zero at critical points, i.e., $\nabla \mathcal{L}(\mathbf{\theta_0}) = 0$, the quadratic approximation to the function around its critical points is given by:

$ \mathcal{L}(\mathbf{\theta}) \approx \mathcal{L}(\mathbf{\theta_0}) + \nicefrac{1}{2}H\mathcal{L}(\mathbf{\theta}-\mathbf{\theta_0})$.
We see that the signature of the Hessian (the signs of its eigenvalues) precisely classifies the critical points ($\nabla \mathcal{L}(\mathbf{\theta_0}) = 0)$ as a local maximum~(all eigenvalues are negative), minimum~(all eigenvalues are positive) or saddle~(eigenvalues have mixed signs). 

In the context of neural networks, second-order information about the loss function contains useful information about the generalizability of minima, the heuristic being that \emph{flat} minima generalize better than \emph{sharp} minima. However, computing the full Hessian of the parameter space is intractable for high-dimensional systems (such as the parameter space of a neural network). To circumvent this, we estimate a low dimensional Hessian of the loss function around its optimum using a quadratic approximation. Restricting ourselves to the area around the optimum yields a lower dimensional space, where only a subset of parameter combinations change in the vicinity.
This Hessian approximation is achieved by sampling around the optimum, constructing a diffusion operator based on these sampled points, and using a neural network to learn the coefficients of the quadratic approximation.

\section{Methods}
\label{sec:methods}

\subsection{Diffusion Curvature}
\label{sec:methods-diffusion-curvature}

We now illustrate how diffusion can be used to measure the relative spreading of geodesics under the influence of Riemannian curvature. 
To build intuition, we first discuss the case of surfaces~(dimension $n = 2$). Three canonical surfaces are the sphere~(or surface of a ball), the cylinder, and the saddle. The Gaussian curvature of these surfaces are positive, zero, and negative, respectively. Imagine taking a sticker and trying to adhere it to one of these surfaces. The sticker, having been printed on a flat piece of paper, has zero Gaussian curvature. It will adhere perfectly to the cylinder, will bunch up~(there will be too much sticker material) when trying to adhere it to the sphere, and will rip~(there will be too little sticker material) when trying to adhere it to the saddle. 

This example showcases the area comparison definition of Gaussian curvature~\cite[pp.\ 225--226]{Berger03}, where one computes the limiting difference between the area $A(r)$ of a geodesic disk on the manifold and a standard Euclidean disk,  $$K=\lim\limits_{r \to 0^+} 12\frac{\pi r^2 - A(r)}{\pi r^4}.$$ For example, Gaussian curvature will be positive when $\pi r^2 > A(r)$, i.e., when the area of the sticker exceeds the area of the corresponding geodesic disk on the sphere. The Bishop-Gromov volume comparison theorem formalizes how to extend this into dimensions $n>2$.

\begin{theorem}[Bishop-Gromov]
\label{thm:bishop-gromov}
Let $(M^n,g)$ be a complete Riemannian manifold with Ricci curvature bounded below by $(n-1)K$. Let $B(p,r)$ denote the ball of radius $r$ about a point $p \in M$ and $B(p_K,r)$ denote a ball of radius $r$ about a point $p_K$ in the complete $n$-dimensional simply-connected space of constant sectional curvature $K$. Then $\phi(r)=\nicefrac{\text{Vol}(B(p,r))}{\text{Vol}(B(p_K,r))}$ is a non-increasing function on $(0,\infty)$ which tends to $1$ as $r \to 0$. In particular, $\text{Vol}(B(p,r)) \leq \text{Vol}(B(p_K,r)).$
\end{theorem}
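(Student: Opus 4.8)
The plan is to pass to geodesic polar coordinates centered at $p$ and reduce the statement to a one-dimensional comparison of volume densities, which is then lifted back to volumes by integration. Writing the Riemannian volume element as $dV = \mathcal{A}(r,\theta)\,dr\,d\theta$, where $\theta$ ranges over the unit sphere $S^{n-1} \subset T_pM$ and $\mathcal{A}(r,\theta) = \sqrt{\det g}$ is the Jacobian of the exponential map along the geodesic $\gamma_\theta$, the volume of the ball becomes $\text{Vol}(B(p,r)) = \int_{S^{n-1}}\int_0^{\min(r,c(\theta))}\mathcal{A}(t,\theta)\,dt\,d\theta$, with $c(\theta)$ the distance to the cut locus in direction $\theta$. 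I extend $\mathcal{A}(\cdot,\theta)$ by zero past $c(\theta)$ so that every radial integral runs to $r$. The model space of constant curvature $K$ has the same polar form with an explicit density $\mathcal{A}_K(r)$ depending only on $r$.

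First I would establish the pointwise density comparison: for each fixed $\theta$, the ratio $\mathcal{A}(r,\theta)/\mathcal{A}_K(r)$ is non-increasing in $r$. The key analytic input is the matrix Riccati equation for the shape operator $S$ of the geodesic spheres, $S' + S^2 + \mathcal{R} = 0$, where $\mathcal{R}(X) = R(X,\partial_r)\partial_r$ and $\text{tr}\,\mathcal{R} = \text{Ric}(\partial_r,\partial_r)$. Taking the trace, setting $m = \text{tr}\,S = \partial_r \log \mathcal{A}$, and applying Cauchy--Schwarz in the form $\text{tr}(S^2) \geq (\text{tr}\,S)^2/(n-1)$ together with the hypothesis $\text{Ric}(\partial_r,\partial_r) \geq (n-1)K$ yields the scalar differential inequality $m' + m^2/(n-1) + (n-1)K \leq 0$. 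Substituting $u = \mathcal{A}^{1/(n-1)}$ converts this into the clean Jacobi inequality $u'' + Ku \leq 0$, sharing the normalization $u(0)=0$, $u'(0)=1$ with the model density $u_K = \mathcal{A}_K^{1/(n-1)}$, which satisfies the equality $u_K'' + Ku_K = 0$. A Sturm comparison then forces $u/u_K$, and hence $\mathcal{A}/\mathcal{A}_K$, to be non-increasing.

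Next I would lift this to volumes in two steps. Integrating the pointwise inequality $\mathcal{A}(r_2,\theta)\mathcal{A}_K(r_1) \leq \mathcal{A}(r_1,\theta)\mathcal{A}_K(r_2)$ (for $r_1 \leq r_2$) over $\theta$ shows that the sphere-area ratio $f(r)/\mathcal{A}_K(r)$ is non-increasing, where $f(r) = \int_{S^{n-1}}\mathcal{A}(r,\theta)\,d\theta$. Then I invoke the elementary ratio-of-integrals lemma: if $f/g$ is non-increasing with $g > 0$, then $\int_0^r f / \int_0^r g$ is non-increasing, which follows by differentiating $F/G$ and using $f(t) \geq (f(r)/g(r))g(t)$ for $t \leq r$. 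Applied with $g = \omega_{n-1}\mathcal{A}_K$, this gives exactly the monotonicity of $\phi(r)$. The limit $\phi(r) \to 1$ as $r \to 0$ follows from the common Euclidean asymptotics $\mathcal{A}(r,\theta) = r^{n-1}(1+O(r^2))$ and $\mathcal{A}_K(r) = r^{n-1}(1+O(r^2))$, and the final inequality $\text{Vol}(B(p,r)) \leq \text{Vol}(B(p_K,r))$ is then the statement that a function equal to $1$ at $0^+$ and non-increasing stays at most $1$.

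The main obstacle I anticipate is making the Sturm comparison rigorous across the singular initial condition: both $m$ and $m_K$ blow up like $(n-1)/r$ as $r \to 0$, so the comparison cannot be started naively at $r=0$. The substitution to $u = \mathcal{A}^{1/(n-1)}$ is precisely what tames this, since $u$ extends smoothly with $u(0)=0$, $u'(0)=1$; I would verify this expansion from the Gauss lemma and the Taylor expansion of the metric in normal coordinates. The second technical point is the cut locus: the zero-extension of $\mathcal{A}$ beyond $c(\theta)$ preserves monotonicity of the density ratio (the numerator can only drop to zero) and does not affect the volume since the cut locus has measure zero, but $c(\theta)$ may be a point of non-smoothness, so I would argue through the one-sided (integrated) monotonicity rather than differentiating across it.
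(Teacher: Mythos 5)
The paper does not prove this theorem at all: it is quoted as a classical result of Riemannian geometry (with a citation to Bishop--Gromov) and used as a black box to derive the paper's own diffusion-curvature bound, so there is no in-paper argument to compare against. Your proposal is the standard textbook proof, and it is correct: the reduction to polar coordinates with the density $\mathcal{A}(r,\theta)$ extended by zero past the cut locus, the traced Riccati equation with Cauchy--Schwarz giving $m' + m^2/(n-1) + (n-1)K \leq 0$, the substitution $u = \mathcal{A}^{1/(n-1)}$ turning this into the Jacobi inequality $u'' + Ku \leq 0$ with the regular initial data $u(0)=0$, $u'(0)=1$, the Sturm argument via $(u'u_K - uu_K')' \leq 0$, and finally Gromov's ratio-of-integrals lemma to pass from sphere areas to ball volumes. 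Your two flagged technical points (the singular initial condition, handled exactly by the $u$-substitution, and the zero-extension across the cut locus, handled by the integrated rather than differentiated form of monotonicity) are the right ones and are resolved correctly. The only detail left implicit is the case $K > 0$ for radii $r > \pi/\sqrt{K}$, where the model density $\mathcal{A}_K$ must itself be cut off at the antipodal point (and Bonnet--Myers guarantees the manifold's balls have also saturated by then) so that the ratio remains well defined and non-increasing on all of $(0,\infty)$; this is a routine addendum rather than a gap in the method.
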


Theorem \ref{thm:bishop-gromov} captures the sticker phenomenon: as curvature increases, the volume of comparable geodesic balls decreases. Positive curvature corresponds to geodesic convergence and smaller volumes, whereas negative curvature corresponds to geodesic divergence (spread) and larger volumes.

The discrete nature of the data manifold makes it impossible to compare volumes as the distance scale (radius) goes to zero. Theorem \ref{thm:coifman} gives us access to the intrinsic distance between data points, i.e., the diffusion distance,  which we then use to define metric balls on the Riemannian manifold $M$. 

\begin{theorem}[Coifman et al. \cite{coifman_geometric_2005}]
\label{thm:coifman}
The diffusion map $\Phi_t(x_i) = [\lambda_1^t \phi_1(x_i),\ldots,\lambda_N^t \phi_N(x_i)]^T$ embeds data into a Euclidean space where the Euclidean distance is equal to the diffusion distance $D_m$, i.e. $D_m^2(x,y) = \| \Phi_t(x) - \Phi_t(y) \|^2 (1+O(e^{-\alpha m}))$ .
\end{theorem}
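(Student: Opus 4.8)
The plan is to reduce the claim to a spectral identity: I would first show that the \emph{full} diffusion map reproduces the diffusion distance exactly, and then treat the $(1+O(e^{-\alpha m}))$ factor as a separate truncation estimate governed by the decay of the eigenvalues. To begin, I would fix the standard definition of the diffusion distance as the weighted $L^2$ distance between the $t$-step transition distributions,
$$D^2(x,y) = \sum_{l} \frac{\bigl(p_t(x,x_l)-p_t(y,x_l)\bigr)^2}{\pi(x_l)},$$
where $p_t(\cdot,\cdot)$ are the entries of $\mathbf{P}^t$ and $\pi$ is the stationary distribution of the Markov chain.

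The core algebraic step exploits the fact that $\mathbf{P}=\mathbf{D}^{-1}\mathbf{K}$ is not symmetric but is conjugate to the symmetric matrix $\mathbf{A}=\mathbf{D}^{1/2}\mathbf{P}\mathbf{D}^{-1/2}=\mathbf{D}^{-1/2}\mathbf{K}\mathbf{D}^{-1/2}$, where $\mathbf{K}$ is the symmetric affinity matrix $[\mathcal{K}(x_i,x_j)]$ and $\mathbf{D}$ the diagonal degree matrix. Since $\mathbf{A}$ is symmetric it admits an orthonormal eigenbasis $\{v_k\}$ sharing the eigenvalues $\{\lambda_k\}$ of $\mathbf{P}$; this produces the right eigenvectors $\phi_k=\mathbf{D}^{-1/2}v_k$ and left eigenvectors $\psi_k=\mathbf{D}^{1/2}v_k$ of $\mathbf{P}$, which are biorthogonal ($\sum_l \phi_j(x_l)\psi_k(x_l)=\delta_{jk}$) and satisfy $\psi_k=\pi\,\phi_k$ up to normalization. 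Iterating the eigendecomposition gives the spectral expansion
$$p_t(x,x_l)=\sum_k \lambda_k^t\,\phi_k(x)\,\psi_k(x_l).$$

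Substituting this expansion into the definition of $D^2$ and expanding the square, the sum over $l$ of $\psi_k(x_l)\psi_{k'}(x_l)/\pi(x_l)$ collapses to $\sum_l \psi_k(x_l)\phi_{k'}(x_l)=\delta_{kk'}$ by biorthogonality, so every cross term vanishes and I am left with
$$D^2(x,y)=\sum_k \lambda_k^{2t}\bigl(\phi_k(x)-\phi_k(y)\bigr)^2.$$
The $k=1$ term drops out because $\phi_1$ is constant on a connected graph, so its differences are zero; the surviving sum is exactly $\|\Phi_t(x)-\Phi_t(y)\|^2$, which establishes the identity for the untruncated map. For the truncation I would bound the neglected tail $\sum_{k>m}\lambda_k^{2t}(\phi_k(x)-\phi_k(y))^2$ relative to the retained sum, using the ordering $1=\lambda_1\geq\lambda_2\geq\cdots$ together with the (approximately) exponential decay of the diffusion eigenvalues inherited from the Laplace--Beltrami spectrum; this makes the relative error scale like $\lambda_{m+1}^{2t}/\lambda_2^{2t}=O(e^{-\alpha m})$, yielding the stated multiplicative factor.

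I expect the main obstacle to be the truncation bound rather than the exact identity. The equality is essentially linear algebra once symmetrization and biorthogonality are in place, but the rate $O(e^{-\alpha m})$ is not combinatorial: it rests on a quantitative lower bound for the spectral gap, equivalently on the decay rate of the heat-kernel eigenvalues of the underlying manifold. I would either assume a uniform spectral gap or invoke the manifold-convergence results of Coifman et al.\ to supply the required decay, and then take care that the error is measured \emph{relative} to the dominant retained coordinates (hence the multiplicative $(1+O(e^{-\alpha m}))$ form) rather than as an additive perturbation.
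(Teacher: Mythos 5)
The paper offers no proof of this theorem: it is imported as a known result of Coifman et al., so the only benchmark is the original source, not an in-paper argument. Your reconstruction is essentially that source's proof---conjugating $\mathbf{P}$ to a symmetric matrix, expanding $p_t$ in the biorthogonal eigenvectors, and collapsing cross terms against the stationary weights to get the exact identity $D^2(x,y)=\sum_{k\ge 2}\lambda_k^{2t}\bigl(\phi_k(x)-\phi_k(y)\bigr)^2=\|\Phi_t(x)-\Phi_t(y)\|^2$---and it is correct in its essentials. You also read the error factor the right way: for the full $N$-coordinate map as literally written in the statement the equality is exact, so the $(1+O(e^{-\alpha m}))$ factor only has content for truncated maps, and (as you candidly flag) its rate rests on a spectral-gap or eigenvalue-decay assumption that the statement leaves implicit; in Coifman et al.'s formulation $m$ is the diffusion time, so the cleanest version of the bound is that the relative error of a \emph{fixed} truncation decays exponentially in $m$, rather than exponentially in the truncation index at fixed time as in your sketch---either reading is defensible given how the theorem is transcribed here, but you should pick one and state the decay hypothesis it needs.
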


We can now define $B_m(x,r)=\{ y \in M \colon D_m(x,y) \leq r \} \subset M$ to be the ball centered at $x \in M$ with diffusion radius $r$. Let $B(x,r)$ denote the set of sampled points from $B_m(x,r)$ and let $|B(x,r)|$ denote the cardinality of this set~(we set aside the problem of choosing an appropriate~$r$ for now; see the supplementary materials for details). To define diffusion curvature, we generate a random walk from a point $x$ by diffusing a Dirac based at $x$, i.e., $m_x(\cdot)=\delta_x{\bf P}^t$ where $\delta_x$ is the one-hot vector, so that $m_x(y)=\mathbf{P}^t(x,y) $ is the transition probability from $x$ to $y$. 

\begin{definition}
The pointwise \emph{Diffusion Curvature} $C(x)$ is the average probability that a random walk starting from a point $x$ ends within $B(x,r)$ after $t$ steps of data diffusion, i.e.,
\begin{equation}
\label{eqn:diffusion_curvature}
C(x) = \frac{\sum_{y \in B(x,r)}m_x(y)}{|B(x,r)|}
\end{equation}
\end{definition}

We can extend this definition to a contiguous region $U$ of the manifold $M$ consisting of neighboring points $U=\{x_j\}_{j=1}^k $ by defining $m_U(\cdot)=\delta_U {\bf P}^t$, where $\delta_U$ is the indicator vector on the set $U$.

We have $C(x) \in [0,1/N]$ where $N=|B(x,r)|$ with larger values indicating higher curvature relative to lower values. The idea is to use diffusion probabilities to capture the relative spreading of geodesics. Intuitively, a random walker is more likely to return to their starting point in a region of positive curvature (where paths converge) than negative curvature (where paths diverge). 

More precisely, in negatively curved regions, the random walker can get lost in the various divergent (disconnected) branches, whereas in positively curved regions the paths of the random walker exhibit more inter-connectivity, so that the return probability of a walk is higher. We make this idea more formal in the following:

\begin{theorem}
If we sample uniformly from a Riemannian manifold with $Ric(M^n) \geq k(n-1)$ then $C(x) \geq C(x_k)$, where $x_k$ is a point from a uniform sampling of the complete $n$-dimensional simply-connected space of constant sectional curvature $k$.
\end{theorem}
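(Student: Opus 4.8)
The plan is to pass to the continuous limit and reduce the claim to a reweighting inequality between geodesic spheres, using the Cheeger--Yau heat kernel comparison theorem as the key geometric input. Under uniform sampling the sampling density cancels between numerator and denominator, so in the large-sample limit the diffusion curvature of Equation~\eqref{eqn:diffusion_curvature} becomes the average of the heat kernel over the geodesic ball,
$$C(x) = \frac{\int_{B(x,r)} p_t^M(x,y)\,d\mathrm{vol}_M(y)}{\mathrm{Vol}_M(B(x,r))},$$
where $p_t^M$ is the heat kernel on $M$, obtained as the continuous limit of $\mathbf{P}^t$ via the correspondence between the anisotropic diffusion operator and the Laplace--Beltrami operator. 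The same expression with the model heat kernel $\bar p_t^k$ and the model volume gives $C(x_k)$.

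First I would invoke the Cheeger--Yau comparison theorem, which states that $Ric \geq (n-1)k$ implies the pointwise lower bound $p_t^M(x,y) \geq \bar p_t^k(d(x,y))$, where $\bar p_t^k$ is the radially symmetric, distance-decreasing model heat kernel. Substituting this into the numerator yields
$$C(x) \geq \frac{\int_{B(x,r)} \bar p_t^k(d(x,y))\,d\mathrm{vol}_M(y)}{\mathrm{Vol}_M(B(x,r))}.$$
I would then rewrite both this lower bound and $C(x_k)$ in geodesic polar coordinates, so that each becomes a weighted average of the single decreasing function $\bar p_t^k(\rho)$ on $[0,r]$, with weight equal to the area $A(\rho)$ of the geodesic sphere of radius $\rho$: weight $A_M(\rho)$ for the bound on $C(x)$, and weight $A_k(\rho)$ for $C(x_k)$.

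The heart of the argument is then a correlation inequality. Bishop's infinitesimal volume comparison --- the monotonicity that underlies Theorem~\ref{thm:bishop-gromov} --- gives that $A_M(\rho)/A_k(\rho)$ is non-increasing in $\rho$, so the normalized weight $A_M$ places relatively more mass at small radii than $A_k$ does. Since $\bar p_t^k(\rho)$ is decreasing in $\rho$, reweighting toward small radii can only increase the average: writing $w_M, w_k$ for the two normalized weights and letting $\rho^*$ be the crossover radius where $w_M - w_k$ changes sign, the factors $\bar p_t^k(\rho) - \bar p_t^k(\rho^*)$ and $w_M(\rho) - w_k(\rho)$ have the same sign throughout $[0,r]$, so one checks $\int \bar p_t^k (w_M - w_k)\,d\rho \geq \bar p_t^k(\rho^*)\int (w_M - w_k)\,d\rho = 0$. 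Chaining this with the Cheeger--Yau step gives $C(x) \geq C(x_k)$.

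The main obstacle I anticipate is not the comparison geometry --- Cheeger--Yau and Bishop supply exactly the two monotonicities needed --- but rather making the continuous limit rigorous: justifying that the finite-sample ratio in Equation~\eqref{eqn:diffusion_curvature} converges to the average heat kernel, and that $\mathbf{P}^t$ genuinely approximates $p_t^M$ under the anisotropic ($\alpha$) normalization and the diffusion-distance balls of Theorem~\ref{thm:coifman}. Additional care is required to restrict $r$ (and, for $k>0$, the model radius $1/\sqrt{k}$) to a range where both the distance-monotonicity of $\bar p_t^k$ and the Bishop weight-monotonicity hold; these are the domain restrictions the paper defers to the supplementary material.
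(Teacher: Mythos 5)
Your proposal is correct in its geometric core but follows a genuinely different route from the paper's proof. The paper stays entirely at the discrete level: Bishop--Gromov (Theorem~\ref{thm:bishop-gromov}) plus uniform sampling gives $|B(x,r)| \le |B(x_k,r)|$, hence $1/|B(x,r)| \ge 1/|B(x_k,r)|$, and the numerators are then compared only via the qualitative remark that a diffused Dirac decays with diffusion distance, so that the larger-cardinality model ball should have a smaller average transition probability. That second step is heuristic: it never produces a pointwise comparison between $m_x$ and $m_{x_k}$, which are measures on different spaces. Your argument supplies precisely the two ingredients that step is missing. The Cheeger--Yau bound $p_t^M(x,y) \ge \bar p_t^k(d(x,y))$ replaces the decay heuristic with a genuine pointwise lower bound against the model kernel, and Bishop's area-ratio monotonicity ($A_M(\rho)/A_k(\rho)$ non-increasing) combined with your single-crossing correlation inequality converts ``relatively more weight at small radii'' into an actual inequality between the two weighted averages of the decreasing function $\bar p_t^k$; the sign argument around the crossover radius $\rho^*$ is sound, and both monotonicities do hold under $\mathrm{Ric} \ge (n-1)k$ (with the usual radius restrictions when $k > 0$). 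What the paper's route buys is brevity and the fact that it speaks about the literal finite-sample quantity in Equation~\eqref{eqn:diffusion_curvature}; what yours buys is an argument that is airtight once one is in the continuum, since the averaging step is proved rather than asserted. The price is the discrete-to-continuum identification ($\mathbf{P}^t$ converging to $p_t^M$, diffusion-distance balls to geodesic balls, cancellation of the sampling density), which you correctly flag as the main gap --- but the paper's own proof silently presupposes the same identifications when it applies Bishop--Gromov to diffusion-radius balls, so you are not assuming more than the paper does; you are merely explicit about it.
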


\begin{proof}
From Theorem~\ref{thm:bishop-gromov} we have $vol(B_m(x,r)) \leq vol(B_m(x_k,r))$ and  together with uniform sampling this yields $|B(x,r)| \leq |B(x_k,r)|$. We  then have $1/|B(x,r)| \geq 1/|B(x_k,r)|$ so that even with uniform transition probability measures we achieve the desired inequality.
Moreover, as the probability measure $m_x(\cdot)$ is constructed by diffusing a Dirac according to affinities based on a Gaussian kernel, the measure decays with relative diffusion distance from $x$. Hence for sets centered at $x$, larger cardinality implies lower average transition probability. We may therefore conclude that

$$C(x) = \frac{\sum_{y \in B(x,r)}m_x(y)}{|B(x,r)|} \geq
 \frac{\sum_{y \in B(x_k,r)}m_{x_k}(y)}{|B(x_k,r)|} = C(x_k).$$
\end{proof}

\begin{figure}
    \centering
    \includegraphics[width=0.95\linewidth]{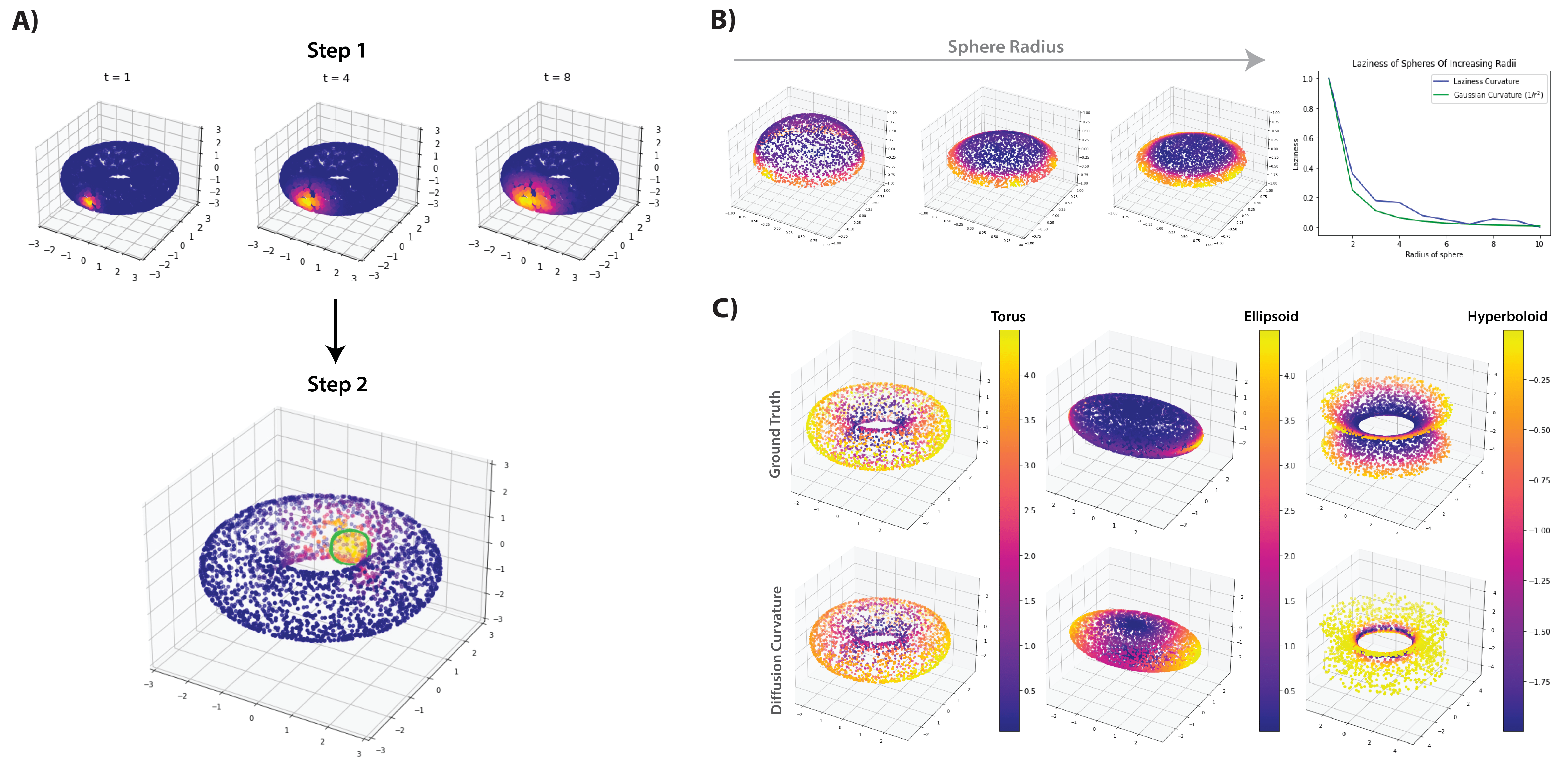}
    \caption{Diffusion recovers Gaussian curvature information (up to a scaling factor and offset) on test manifolds, e.g. sphere, torus, ellipsoid and hyperboloid.}
    \label{fig:diff-curvature}
\end{figure}

\paragraph{Limitations.} While diffusion curvature can be computed easily, it also necessitates an accurate sampling procedure, which can be problematic for difficult spaces. Moreover, due to its formulation, diffusion curvature only provides relative~(non-negative) values.

\subsection{Neural Network Implementation of Quadratic Forms}
\label{sec:methods-nn}

In the previous section, we have shown that the diffusion operator and associated diffusion probabilities contain information for computation of curvature from point cloud data. In this section, we extend this measure from a scalar quantity to an entire quadratic form by training a neural network on the diffusion map embedding of the point cloud. A schematic of our neural network, called CurveNet, is shown in Figure~\ref{fig:nn_schematic}.

CurveNet takes as input points $\{X_i = (\mathbf{x}_i, y_i)\}_{i=1}^N$ sampled near a local minimum or saddle point $(\mathbf{x}_s, y_s)$. Here $\mathbf{x}_i \in \mathbb{R}^k$ are input parameters for an objective function or loss, $f$, with $y_i = f(\mathbf{x}_i)$ and $\nabla f(\mathbf{x}_s) = 0$. We map each data point, $X_i$, to a $N$ dimensional diffusion vector, $\Phi_t(X_i) = [\lambda_1^t \phi_1(X_i),\ldots,\lambda_N^t \phi_{n}(X_i)]^T$.

Recall from Equation \ref{equation:diffusionmap} that the diffusion map is an eigendecomposition of the diffusion operator $\mathbf{P}^t$. Hence, without reduction of dimensionality, it encodes all information present in the diffusion operator, and contains $N$ dimensions for a sampling of $N$ points. Here we utilize this to estimate a $k \times k$ quadratic form of the sampled point cloud, $y_i=\mathbf{x}_i^T\hat{Q}\mathbf{x}_i$, using a neural network. Note that when the point cloud is a sample of a local region in the manifold, this gives a localized quadratic form describing 2\textsuperscript{nd} order information in the region sampled.

Since our focus is on local quadratic approximation, we train CurveNet on samples from idealized surfaces of the form

$f(x) = x^TQx$

where $Q$ is a symmetric matrix with dimensionality dependent on the cardinality of the point cloud; see Figure~\ref{fig:toy-data} for some examples of generated $2$-D surfaces.

We generate a series of such surfaces by varying parameters within $Q$ and sampling points. We then provide the diffusion map embedding of the sampled points to the neural network and train it to predict $Q$ using a mean-squared error loss. We also include an $L_1$ regularization term to promote sparsity in the coefficients, which estimates the dimensionality of the input data, leading to

\begin{equation}
\mathcal{L}(Q,\hat{Q}) = \sum_{j=1}^{k} \sum_{j'=1}^{j} (Q_{jj'} - \hat{Q}_{jj'})^2 + \alpha |Q_{jj'}|.
\end{equation}

\begin{figure}
    \centering
    \includegraphics[width=0.85\linewidth]{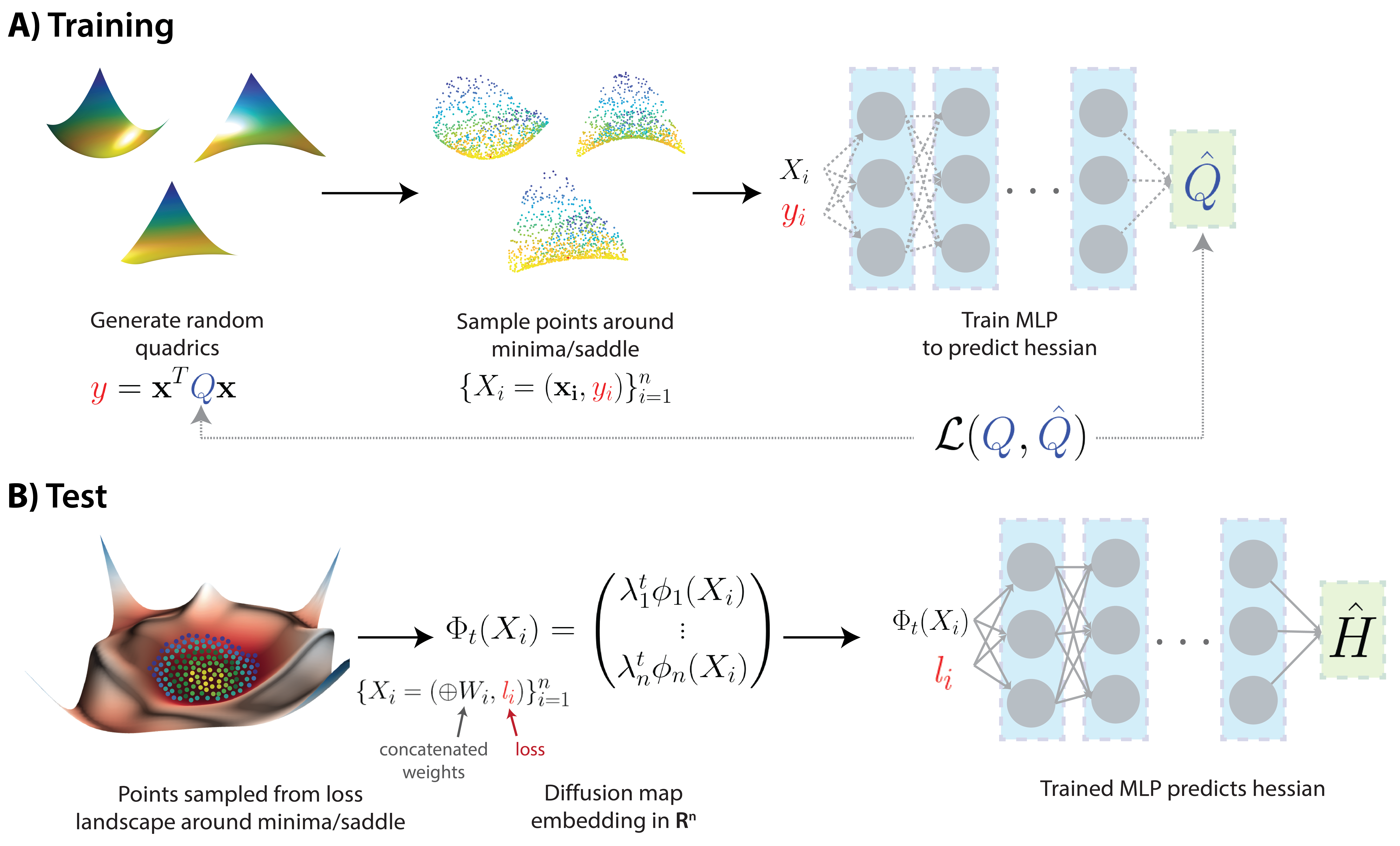}
    \caption[Schematic of the neural network for quadric approximation.]{The neural network is trained on randomly generated quadrics and tested on points sampled near local minima or saddles in the loss landscape of neural networks.}
    \label{fig:nn_schematic}
\end{figure}

\paragraph{Curvature and the Hessian.} In the classical setting of surfaces, a relevant notion of curvature is the Gaussian curvature $\kappa$. In the special case that $M$ is realized as the graph of a function $f(x, y)$, the Gaussian curvature can be calculated explicitly from the Hessian matrix $H$ of $f$ by the formula $$\kappa(x, y) = \frac{\text{det}(H_{(x, y)}f)}{(1+\abs{\nabla_{(x, y)} f}^2)^2}.$$ 

Note that at a local optimum of $f$ (where $\nabla f = 0$) the Gaussian curvature is completely characterized by the eigenvalues of the Hessian.

In higher dimensions, by contrast, a single number is no longer sufficient to capture the full curvature information of a Riemannian manifold, and one instead appeals to the Riemann curvature tensor. In the special case that $M$ is realized as the graph of a function $f \colon \mathbb{R}^n \to \mathbb{R}$, the spectrum of the Hessian of $f$ again contains important~(albeit not complete) curvature information. As we are interested in measuring curvature in the setting of loss landscapes, we apply CurveNet described above to directly compute the Hessian of the parameters with respect to the loss function of a neural network in question. 

\section{Results}
\label{sec:results}

We validate both diffusion curvature and CurveNet using 

\begin{inparaenum}[(a)]
    \item synthetic test cases, and
    \item single-cell RNA sequencing data~(scRNA-seq).
\end{inparaenum}

Moreover, we analyze the quality of our curvature estimation methods in a data sampling scenario, where we sample points in the vicinity of an optimum. All test cases are chosen to assess the efficiency and accuracy of our methods. 

We trained CurveNet on $N=1000$ samples from idealized surfaces, since this defines the dimensions of the diffusion map used for curvature estimation on loss landscapes. As the parameter space of neural networks can be large, sampling points around the minima is a method of reducing dimensionality and restricting the estimation to a less complex space. We used intrinsic dimensions of $k=\{2, 3, \ldots 20\}$ such that the same neural network can learn higher or lower dimensional spaces. We trained on $5000$ different randomly generated quadrics, all sampled using $N=1000$ points, for each intrinsic dimension. CurveNet then outputs a quadratic form with $K(K+1)/2$ entries, where ($K = \max(k) = 20$), which we compare with the known ground truth using mean squared error (Table \ref{tab:toy-data}). Training and testing were done on $8$ core Tesla K80 GPUs with $24$ GB memory/chip. Architectural details and code for Curvenet and the implementation of diffusion curvature are available via an anonymized GitHub URL provided in the supplementary material. 

\begin{table}
  \caption{MSE ($\mu \pm \sigma$) of neural network Hessian estimation. All values scaled by $10^3$, lower is better.}
  \label{tab:toy-data}
  \centering
  \small
  \begin{tabular}{lcccccc}
    \toprule
    \textbf{Test Data} & \multicolumn{1}{c}{CurveNet} & \multicolumn{1}{c}{CurveNet } & \multicolumn{1}{c}{Baseline}\\
    Intrinsic dim. & (\# epochs = 1000) & (\# epochs = 100) & (random)  \\
    (\# nonzero coeffs.) &\\
    \midrule
    2 (4)    & $2.115\pm1.250$ & $14.071\pm8.664$ & $612.145\pm2.836$\\
    5 (16)   & $0.964\pm0.144$ & $4.865\pm3.382$ & $554.652\pm1.704$\\
    10 (56)  & $0.752\pm0.013$ & $0.798\pm0.029$ & $530.203\pm0.645$\\
    15 (121) & $0.766\pm0.022$ & $0.085\pm0.007$ & $521.050\pm0.485$\\
    20 (211) & $0.712\pm0.034$ & $0.082\pm0.004$ & $515.739\pm0.397$\\
    \bottomrule
  \end{tabular}
\end{table}

\begin{figure}
    \centering
    \includegraphics[width=0.75\linewidth]{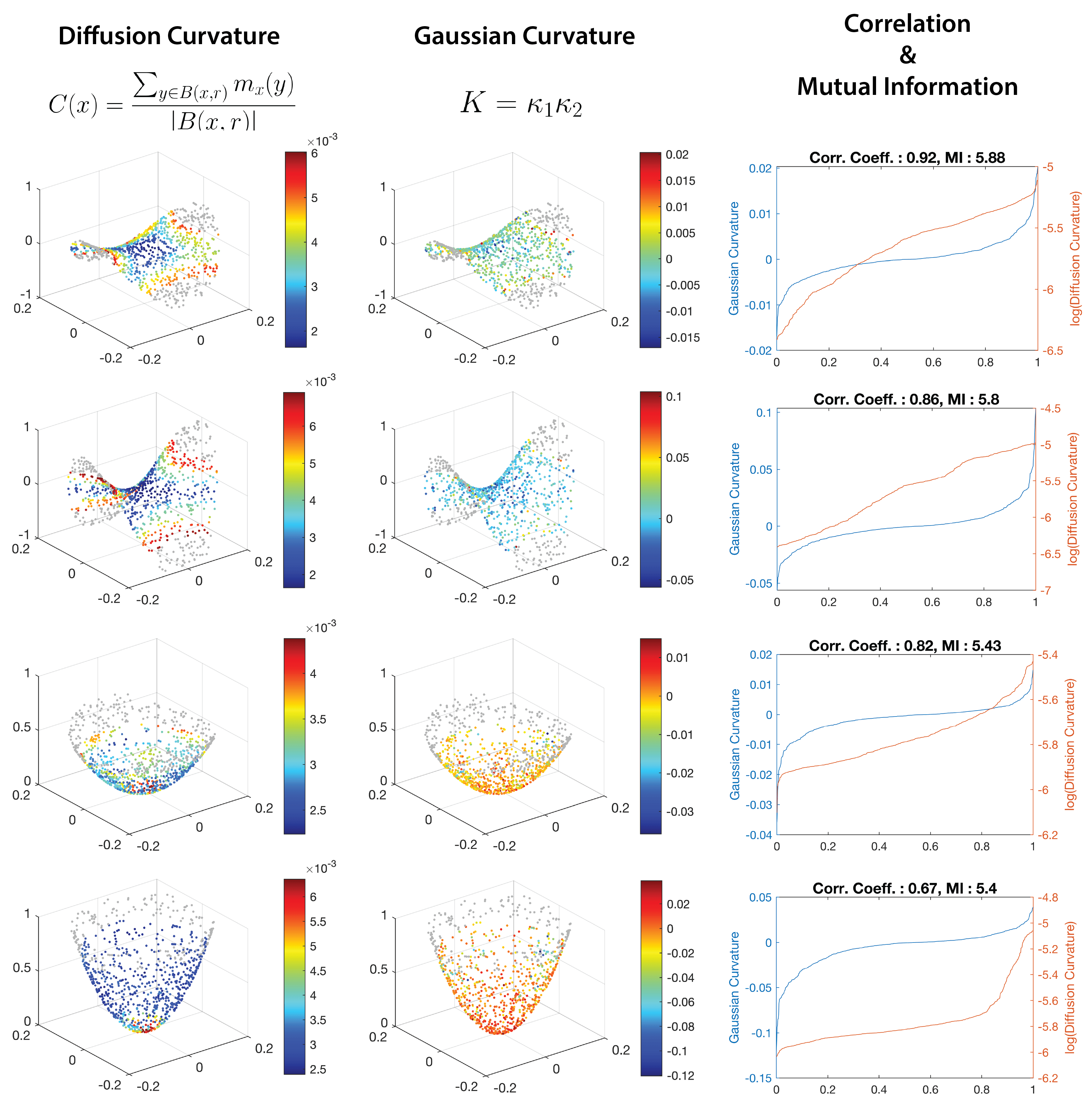}
    \caption{Curvature estimation on toy dataset of quadric surfaces.}
    \label{fig:toy-data}
\end{figure}

\subsection{Toy test cases for curvature estimation}
\label{sec:results-toy-data}

We generated a series of synthetic datasets where the primary objective is to estimate curvature at central points (which are not affected by edge effects). Figure \ref{fig:toy-data} shows a series of artificially generated $3D$ surfaces whose curvature varies in two principal directions from positive to negative. In the left column we show the curvature estimate given by Diffusion Curvature (Eqn. \ref{eqn:diffusion_curvature}). The second column depicts Gaussian curvature which is the product of the two principal curvatures at each point, the third column contains biaxial plots showing the correlation between Gaussian and Diffusion curvature.

We observe that Diffusion Curvature is capable of approximating/capturing Gaussian Curvature: despite being scaled differently, Diffusion Curvature highlights essentially the same structures as Gaussian Curvature. This qualitative observation is quantified by the correlation. Overall, we obtain high correlations for the different surfaces. The last surface is slightly different, as high values measured by Diffusion Curvature are more concentrated within the ``cusp'', whereas Gaussian curvature spreads out such values over a larger part of the surface. This example demonstrates the benefits of Diffusion Curvature, though: in the context of loss landscape analysis, Diffusion Curvature is much more sensitive to such sharper minima, thus facilitating their detection.

\subsection{Curvature estimation for single-cell data}
\label{sec:results-single-cell}

We estimated the curvature of a publicly available single-cell point cloud dataset obtained using mass cytometry of mouse fibroblasts. Mass cytometry is used to quantitatively measure $2005$ mouse fibroblast cells induced to undergo reprogramming into stem cell state using using $33$ channels representing various protein biomarkers. Such a system is often called induced pluripotent stem cell (iPSC) reprogramming~\citep{zunder_continuous_2015}. This dataset shows a progression of the fibroblasts to a point of divergence where two lineages emerge, one lineage which successfully reprograms and another lineage that undergoes apoptosis~\citep{moon_visualizing_2019}. We note that our model correctly identifies the initial branching point (with cells that don't survive) as having low values of diffusion curvature indicating relatively negative curvature due to divergent paths out of the point (resulting in divergent random walks, see Figure~\ref{fig:ipscpred}). On the other hand it shows higher values indicating flat curvature along the horizontal branch. 

We also applied diffusion curvature to a single-cell RNA-sequencing dataset of human embryonic stem cells \cite{moon_visualizing_2019}. These cells were grown as embryoid bodies over a period of $27$ days, during which they start as human embryonic stem cells and differentiate into diverse cellular lineages including neural progenitors, cardiac progenitors, muscle progenitors, etc. This developmental process is visualized using PHATE in Figure \ref{fig:embryoid} (left), where embryonic cells (at days 0-3, annotated in blue) progressively branch into the two large splits of endoderm (upper split) and ectoderm (lower split around day 6. Then during days 12-27 they differentiate in a tree-like manor into a multitude of lineages.  Diffusion curvature, which is illustrated in the plot on the right shows that the tree-like structure that emerges during days 12-27 is consistently lower curvature than the initial trajectory which proceeds in a linear manner at days 0-3. This accords with the idea that divergent lineage structure is associated with low (negative) curvatures.  Conversely, the endpoints of the transition corresponding to the stem cell state (days 0-3) and differentiated state (days 18-27) are associated with relatively high diffusion curvature values, indicative of positive curvature.

\begin{figure}
    \centering
    \includegraphics[width=0.5\linewidth]{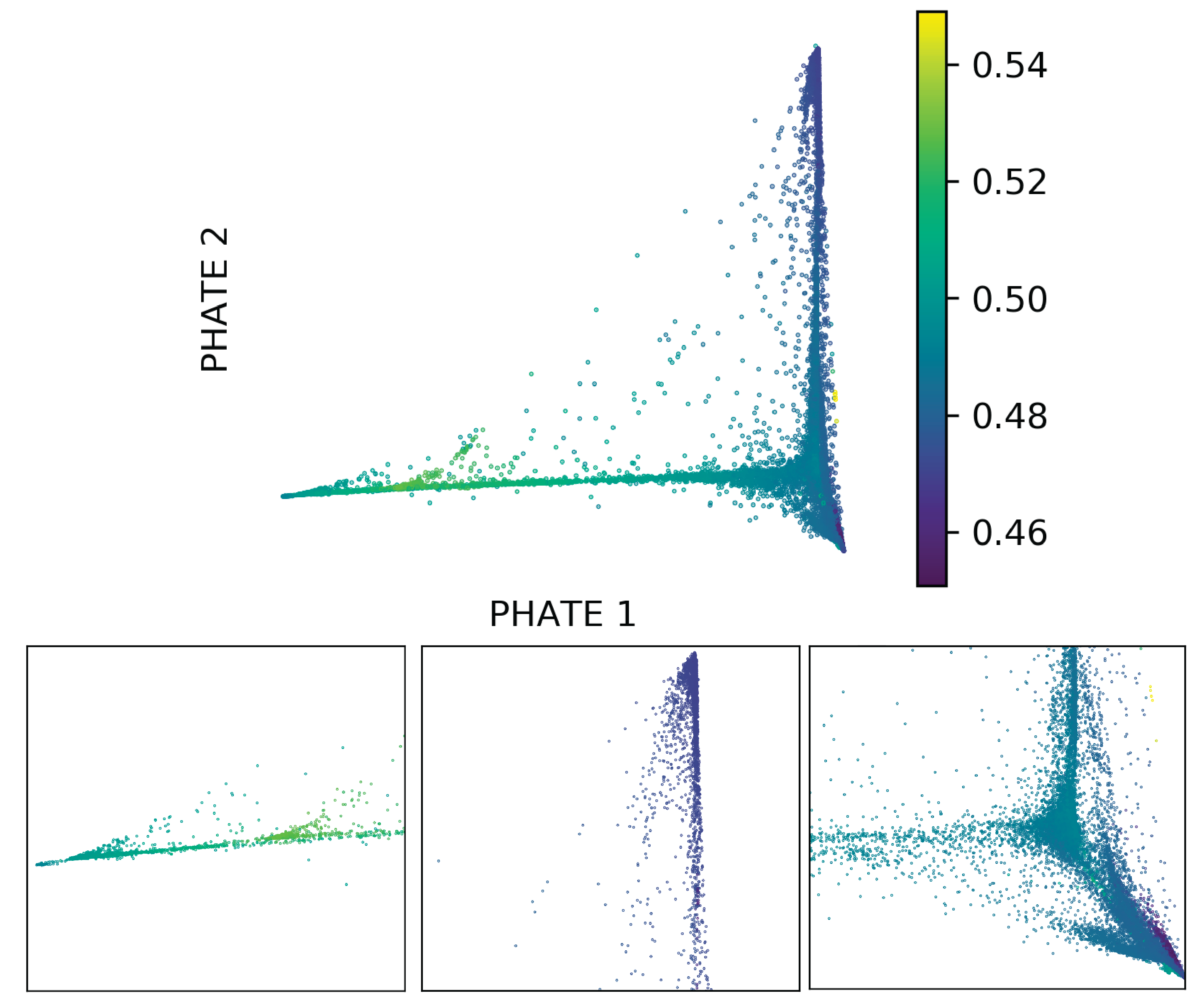}
    \caption{Diffusion curvature of the iPSC data.}
    \label{fig:ipscpred}
\end{figure}

\begin{figure}
    \centering
    \includegraphics[width=0.9\linewidth]{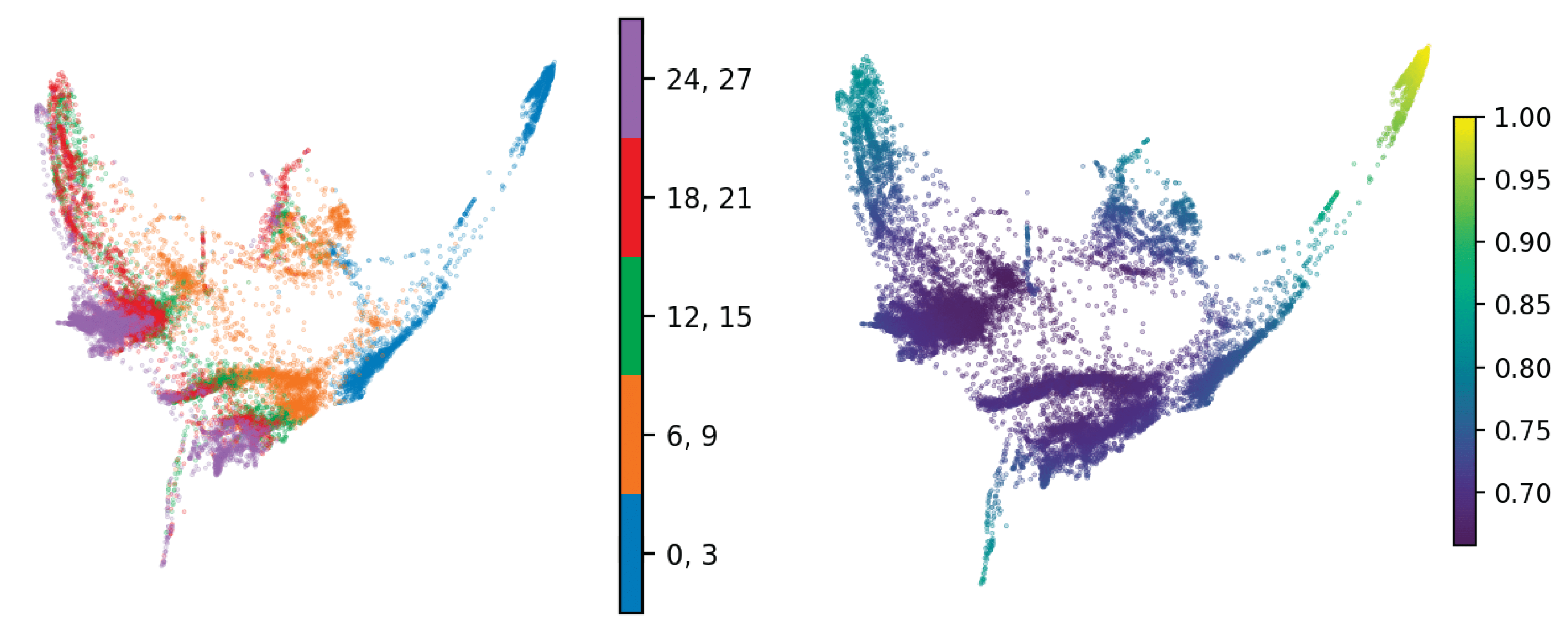}
    \caption{Diffusion curvature of embryonic stem cell differentiation. Left: PHATE visualization of scRNA-seq data color coded by time intervals. Right: PHATE plot colored by diffusion curvature values.}
    \label{fig:embryoid}
\end{figure}

\subsection{Hessian estimation and sampling}

To obtain second-order information around a critical point, $x_s$, we estimate a local Hessian: we first (for both toy test cases and neural networks) sample $n$ points, $x_1,x_2 \dots x_n$,  $x_i \in \mathbb{R}^k, $ where $k$ is the full dimensionality of the domain of the toy functions or the neural network: $f:\mathbbm{R}^k \rightarrow{} \R$ and the value of the objective function or loss, $f \in \R$ can be obtained by evaluating $f(x_i)$. In both settings, this allows us to obtain an $\mathbbm{R}^{n \times (k+1)} $ matrix. Each row represents a sampled point $x_i \in \mathbb{R}^k $ with its associated loss or objective function in the last column. We consider this column as a special loss axis and use it as an input to the neural network to learn the coefficients along with the diffusion axes, which are obtained from the sampled points. 

The diffusion axes were obtained by constructing a diffusion map in $\mathbbm{R}^{n}$ based on the sampled points around the optimum, i.e., the minimum. We uniformly sampled $1000$ points on a $k$-dimensional hypersphere which was then scaled by the parameter space of the optimum or saddle, as well as the gradient at that point. Special care was taken to ensure the points were sampled locally by evaluating at the relative difference between the evaluated loss at locally sampled points around the optimum or saddle and the actual loss at the optimum or saddle. We then use these diffusion coordinates~$\Phi(x)$ and the value at the sampled point~$f(x) \in \mathbbm{R}$ as an input pair $(\Phi(x),f(x))$ to the neural network, which we ultimately use to estimate the Hessian. 

Figures~\ref{fig:mnist-loss} and \ref{fig:mnist-cnn-loss} show the eigenspectrum of the Hessians estimated using CurveNet. We observe that the number of \emph{negative} eigenvalues of the Hessian matrix~(indicative of a maximum in the loss landscape) decrease over training epochs. In Figure~\ref{fig:mnist-loss}, when comparing epoch~$25$ and epoch~$200$, we observe a marked shift in the~(cumulative) density of eigenvalues towards positive eigenvalues, showing that the feed-forward neural network is approaching a minima in the parameter space. Similarly, in Figure~\ref{fig:mnist-cnn-loss} a convolutional neural network trained on MNIST approaches a sharp minima (large positive eigenvalues) over training epochs. These results are in agreement with our understanding of how model parameters are optimized during stochastic gradient descent, and demonstrate the capability of CurveNet to approximate to local Hessians. Future work will further validate this methodology using networks that are deliberately trained to produce poor minima and exhibit low generalizability.

\begin{figure}
    \centering
    \includegraphics[width=0.85\linewidth]{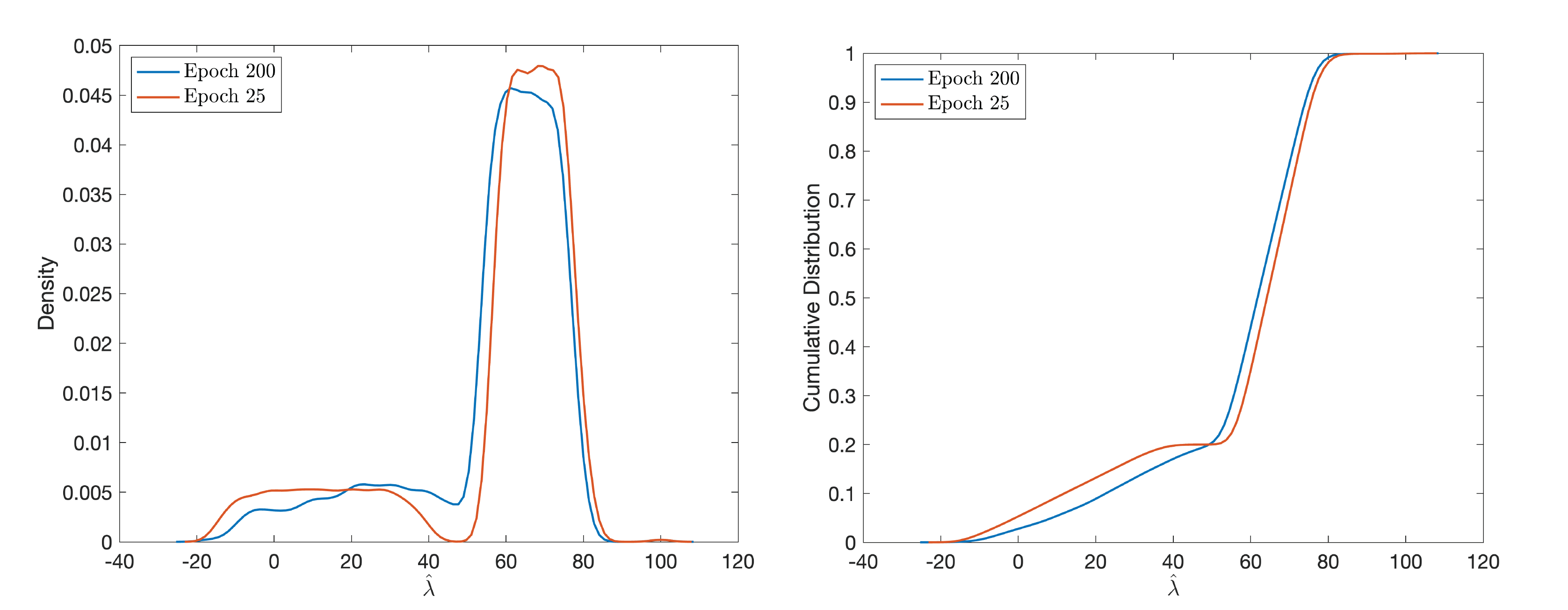}
    \caption{%
        Density~(left) and cumulative density~(right) of the eigenspectrum when sampling around an optimum in the loss landscape of a feed-forward network trained on MNIST. At epoch $200$, we observe a substantial decrease in the number of negative eigenvalues, indicating that the model parameters have shifted towards a better (sharper) minima.
    }
    \label{fig:mnist-loss}
\end{figure}

\begin{figure}
    \centering
    \includegraphics[width=0.85\linewidth]{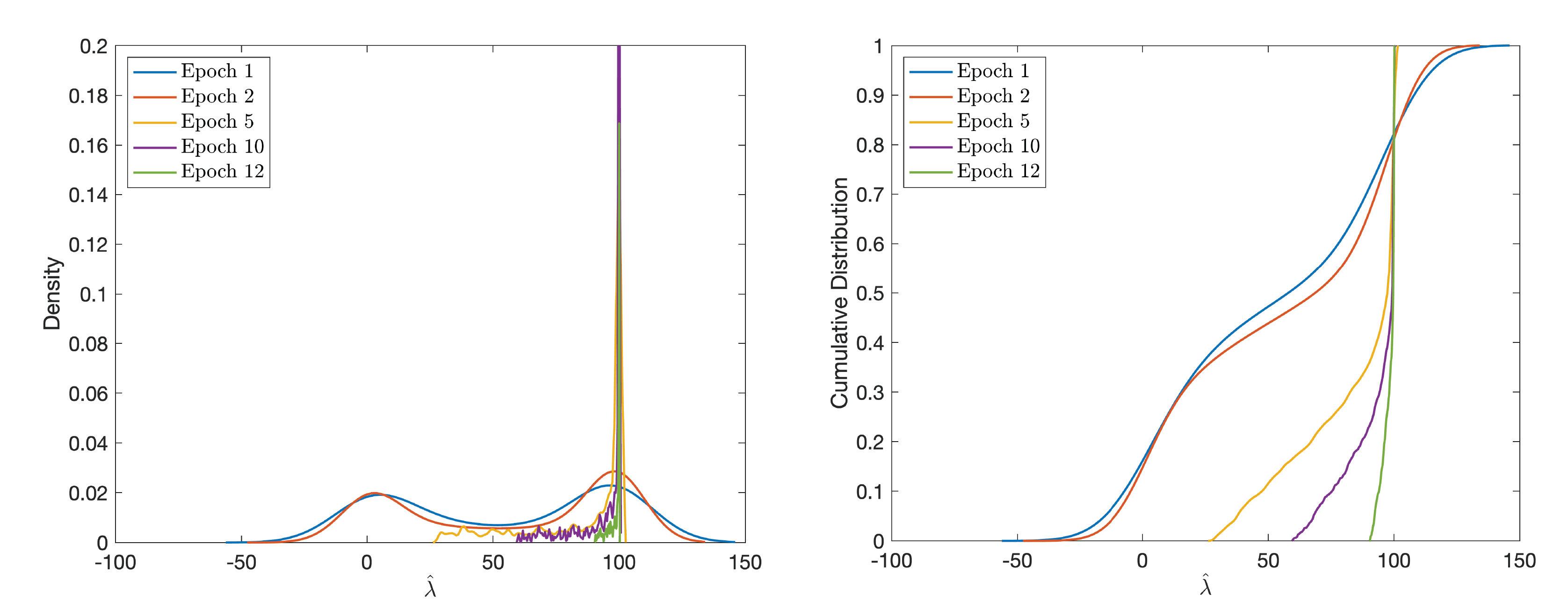}
    \caption{%
        Density~(left) and cumulative density~(right) of the eigenspectrum when sampling around an optimum in the loss landscape of a CNN trained on MNIST. At epoch $\ge 5$, we do not observe any negative eigenvalues, indicating that the model parameters have reached a local minima.
    }
    \label{fig:mnist-cnn-loss}
\end{figure}

\section{Conclusion}
\label{sec:conclusions}

We proposed \emph{diffusion curvature}, a new measure of intrinsic local curvature of point clouds. Our measure leverages the laziness of random walks, obtained using the diffusion maps framework. We link diffusion curvature to existing volume comparison results from Riemannian geometry. In contrast to these notions, diffusion curvature can be computed effectively via neural networks even for high-dimensional data. While we demonstrated the effectiveness of such a curvature measure by analyzing numerous datasets of varying complexities, our formulation also leads to new research directions. Of particular interest will be proving additional results about our measure, relating it to existing quantities such as the Laplace--Beltrami operator, as well as formally proving its stability properties. We also want to develop new methods that use diffusion curvature to compare different datasets; being a quantity that is invariant under transformations~(such as rotations of a point cloud), we consider diffusion curvature to be a suitable candidate for assessing the similarity of high-dimensional complex point clouds. 


\bibliographystyle{unsrt}  
\bibliography{template}
    
\end{document}